\DeclareMathOperator{\sg}{sg}
\newcommand{\hht}{\hat t}
\newcommand{\bCcal}{{\bar \Ccal}}
\newcommand{\vkappa}{\kappa}
\icmltitlerunning{Adaptive Discrete Communication Bottlenecks with Dynamic Vector Quantization}
\begin{document}

\twocolumn[
\icmltitle{Adaptive Discrete Communication Bottlenecks \\ with Dynamic Vector Quantization}




\begin{icmlauthorlist}
\icmlauthor{Dianbo Liu}{yyy}
\icmlauthor{Alex Lamb}{yyy}
\icmlauthor{Xu Ji}{yyy}
\icmlauthor{Pascal Notsawo}{yyy}
\icmlauthor{Mike Mozer}{comp}
\icmlauthor{Yoshua Bengio}{yyy}
\icmlauthor{Kenji Kawaguchi }{sch}
\end{icmlauthorlist}

\icmlaffiliation{yyy}{Mila,Montreal,QC, Canada}
\icmlaffiliation{comp}{Google,CA, USA}
\icmlaffiliation{sch}{National University of Singapore, Singapore}

\icmlcorrespondingauthor{Dianbo Liu}{liudianbo@gmail.com}
\icmlcorrespondingauthor{Alex Lamb}{alex6200@gmail.com}

\icmlkeywords{Machine Learning, ICML}

\vskip 0.3in
]



\printAffiliationsAndNotice{\icmlEqualContribution} 

\begin{abstract}

Vector Quantization (VQ) is a method for discretizing latent representations and has become a major part of the deep learning toolkit. It has been theoretically and empirically shown that discretization of representations leads to improved generalization, including in reinforcement learning where discretization can be used to bottleneck multi-agent communication to promote agent specialization and robustness. 
The discretization tightness of most VQ-based methods is defined by the number of discrete codes in the representation vector and the codebook size, which are fixed as hyperparameters.
In this work, we propose learning to dynamically select discretization tightness conditioned on inputs, based on the hypothesis that data naturally contains variations in complexity that call for different levels of representational coarseness.  
We show that dynamically varying tightness in communication bottlenecks can improve model performance on visual reasoning  and reinforcement learning tasks. 

\end{abstract}

\section{Introduction}

Discretization of latent representations via vector quantization is a method for improving the robustness and generalization of learned models \cite{oord2017neural,liu2021discrete}. 
Replacing a continuous representation with a discrete representation, and limiting the capacity of the discrete representation, both improve generalization guarantees \cite{liu2021discrete}.
Discretization imposes a bottleneck \cite{tishby2000information} as the representation can take fewer values, and reducing capacity of the discrete representation tightens this bottleneck. 

Discretization can be used to bottleneck communication in modular inference models, for example in multi-agent reinforcement learning.
Modular inference combines outputs across modules into new module inputs while restricting the number of modules that communicate \cite{goyal2019recurrent}.
To make up for the loss in expressivity from being restricted to few communicative modules in each timestep, the model is forced to learn to perform inference in steps of specialized skill as opposed to applying all skills at once, which specializes the modules by definition since few are active in each step
\cite{darwen1996automatic,goyal2019recurrent,bengio2017consciousness,lamb2021transformers}. 
The communication bottleneck is tightened by switching from continuous to discrete representations \cite{liu2021discrete}. 
Beyond improvements in generalization error from decreasing capacity, it has been hypothesized that the inductive bias of bottlenecked communication between modules improves generalization by 1) reflecting the true causal structure of data generated by sparse interactions between few variables, 2) increasing robustness when modules are recombined in novel ways, compared to unspecialized modules with all-to-all communication, 3) improving sample efficiency since specialized modules require fewer training points to learn
\cite{goyal2019recurrent,bengio2017consciousness}.


However, discrete bottleneck methods typically lack adaptability. 
This work improves on the vector quantization method of \citet{liu2021discrete} by making tightness of the bottleneck dynamic. 
Instead of a single discretization function that maps all inputs to a discrete space with fixed size, we use a pool of discretization functions with varying levels of output capacity, and choose the function applied for a given input using key-query attention between representations of the input and discretization functions. 

The hypothesis is that this improves generalization because the optimal level of discretization suggested by the bounds, which is the tightest bottleneck such that training error can still be minimized \cite{liu2021discrete}, is unlikely to be the same for all regions of a data distribution.
The shortest description that captures adequate information in inputs for performing well on a task generally varies with the input, for example images contain different numbers of objects, and gameplay involves goals of varying complexity.
In terms of generalization error, using a single discretization capacity is potentially wasteful for simpler inputs, because the generalization gap can be reduced by selectively imposing a tighter representation bottleneck on the former without increasing training error.
To minimize the model selecting looser bottlenecks than necessary, we use an objective function that penalizes the choice of bottleneck proportional to its capacity.


In summary, the contributions of our
work are as follows:
\begin{itemize}
    \item We propose a dynamic vector quantization method (DVQ) that adaptively chooses the number of discrete codes and the codebook size that control tightness of the bottleneck.
    \item Our theoretical analysis shows that dynamic adjustment of the bottleneck improves generalization error under the sufficient condition of tighter average bottleneck and equal training loss.
    \item We empirically show improvement in performance by using DVQ to discretize inter-component communication within a deep learning model and inter-agent communication between agents, compared to using VQ with fixed bottleneck capacity. 
\end{itemize}

\section{Method}


\subsection{Communication discretization}

The process of converting data with continuous attributes into data with discrete attributes is called discretization \citep{chmielewski1996global}. In this study, we use discrete latent variables to quantize information communicated among different modules in a similar manner to codebooks \citep{liu2021discrete}, which is a general version of vector quantization (VQ-VAE, \citet{oord2017neural}) where the hidden representation is a list of discretized codes instead of a single discretized code. 
The discretization process for each vector $h  \in \Hcal\subset \RR^{m}$ is described as follows. First, vector $h$ is divided into $G$ segments 
$s_1,s_2,\dots,s_G$ with $h=\textsc{concatenate}(s_1,s_2,\dots,s_G),$ where each segment $s_i \in \RR^{m/G}$ with $\frac{m}{G} \in \NN^+$. Second, each continuous segment $s_i$ is discretized separately by being mapped to a discrete latent space vector $e \in \RR^{L \times (m/G)}$ where $L$ is the size of the discrete latent space (i.e., an $L$-way categorical variable):
$$
e_{o_i}=\textsc{discretize}(s_i), \quad \text{ where } o_i=\argmin_{j \in \{1,\dots,L\}} ||s_{i}-e_j||.
$$
These discretized codes, which we call the factors of continuous representation $h$, are concatenated to obtain the final discretized vector $z$ as

\begin{equation}
\begin{aligned}
z=\textsc{concatenate}(\textsc{discretize}(s_1),\\ \textsc{discretize}(s_2),...,\textsc{discretize}(s_G)).
\end{aligned}
\end{equation}


The multiple steps described above can be summarized by $z=q(h,L,G)$, 
where $q(\cdot)$ is the whole discretization process using the codebook, $L$ is the codebook size, and $G$\ is number of segments or factors per vector. 

The loss for model training is $\mathcal{L} = \mathcal{L}_\mathrm{task} + \mathcal{L}_{discretization}$ where
%
\begin{equation} \label{equation2}
\begin{aligned} 
 \mathcal{L}_{discretization} = \frac{1}{G} \bigg(\sum^{G}_{i}||\sg(s_{i})-e_{o_i}||^2_2
 \\ + \beta \sum^{G}_{i}||s_{i}-\sg(e_{o_i})||^2_2 \bigg).
\end{aligned}
\end{equation}
$\mathcal{L}_\mathrm{task}$ is the loss for the specific task, \textit{e.g.}, cross entropy loss for classification or mean square error loss for regression, $\sg$ refers to a stop-gradient operation that blocks gradients from flowing into its argument, and $\beta$ is a hyperparameter which controls the reluctance to change the code. The term $\sum^{G}_{i}||\sg(s_{i})-e_{o_i}||^2_2$ is the codebook loss, which only applies to the discrete latent vector and brings the selected $e_{o_i}$ close to the output segment $s_i$. The term $\sum^{G}_{i}||s_{i}-\sg(e_{o_i})||^2_2$ is the commitment loss, which only applies to the target segment $s_i$ and trains the module that outputs $s_i$ to make $s_i$ stay close to the chosen discrete latent vector $e_{o_i}$. Following the original codebooks and VQ-VAE papers, we found 0.25 to be a good value for $\beta$, and $e$ was initialized using $k$-means clustering on vectors $h$ with $k=L$.

\subsection{Dynamic bottlenecks}

Instead of a single codebook and discretization bottleneck, we use multiple bottlenecks where the tightness of each bottleneck is defined by the number of factors and codebook size. A pool of $N$ discretization functions 
$Q={\{q_t\}_{t \in [N]}}$
is made available to all representations being discretized, where the number of factors and codebook size for each discretization function are given by $G_t$ and $L_t$ respectively.
The discretization functions in the pool do not share parameters nor codebooks with each other. 
Each of the discretization functions is associated with a signature key vector $k_t \in \RR^{l}$
which is randomly initialized and learned in the training process. Key-value attention is conducted between $k_t$ and query $f(h) \in \RR^{l}$ where $h$ is the continuous representation being discretized and $f$ is a single layer neural network projector.
Gumbel-softmax \citep{jang2016categorical} is applied on the attention scores to make a one-hot selection of which $q_t$ to use, with categorical distribution $\pi^h$ over discretization functions for $h$ given by 
\begin{align}
    \pi^h(t) = \frac{\operatorname{exp}(k_t^\intercal f(h)) }{\sum_{j \in [N]} \operatorname{exp}( k_j^\intercal f(h))}.
\end{align}




\begin{figure*}[b]
    \centering
    \subfigure[Fixed bottleneck with a single discretization function \citep{liu2021discrete}.]{ \raisebox{7em}{\includegraphics[width=0.25\linewidth]{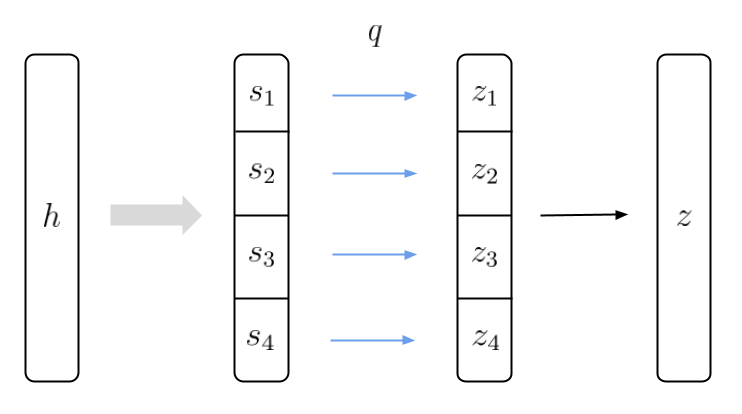}}
    }\hfill
    \subfigure[Dynamic bottleneck with flat attention over discretization functions.]{ \includegraphics[width=0.31\linewidth]{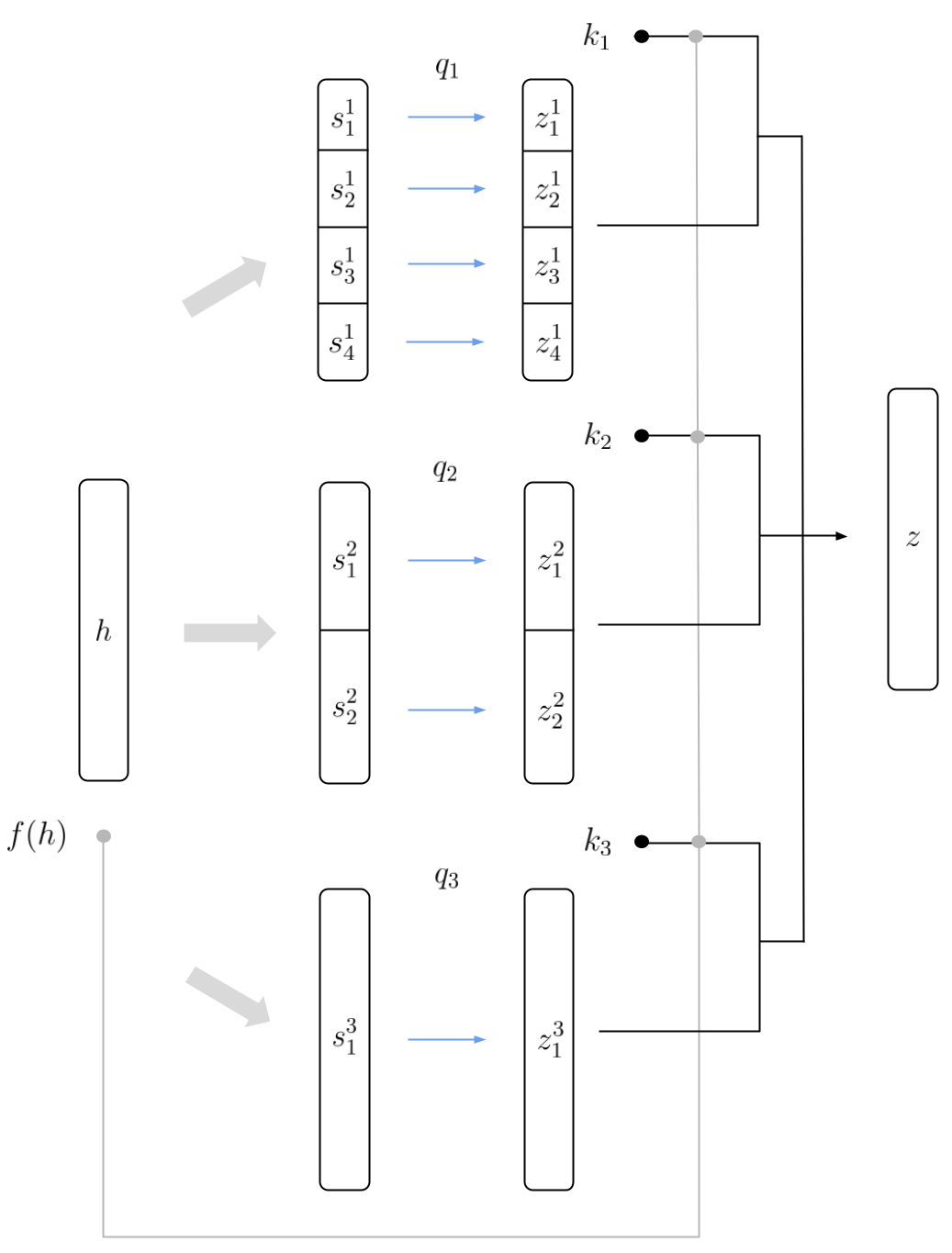}
    }\hfill
    \subfigure[Dynamic bottleneck with hierarchical attention over discretization functions.]{ \raisebox{3em}{\includegraphics[width=0.31\linewidth]{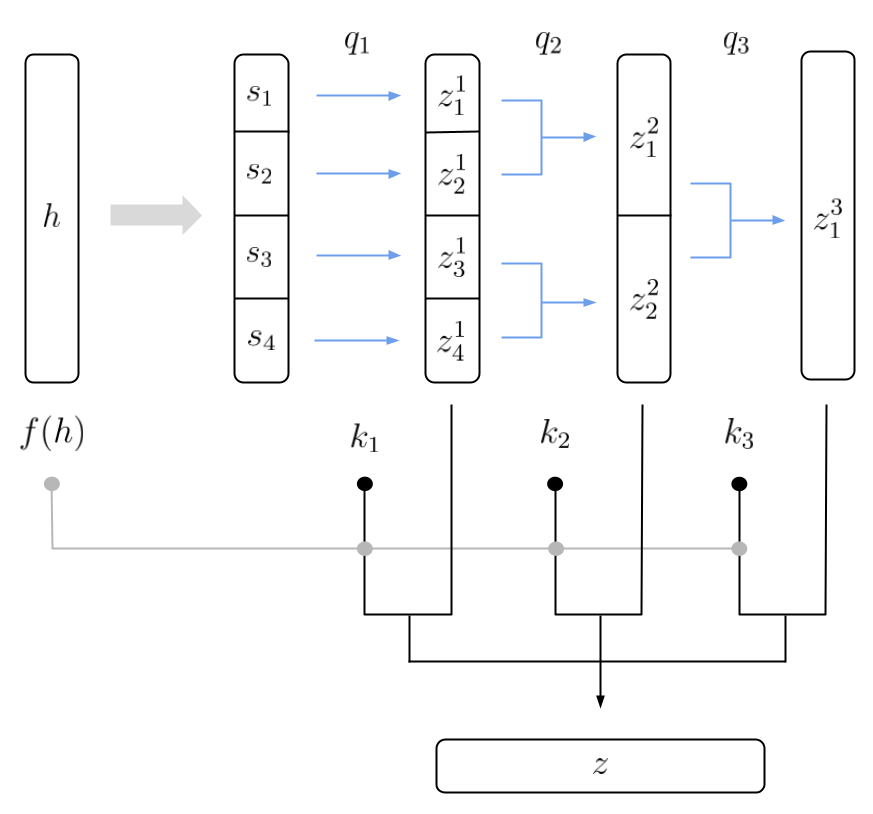}}
    }
    \caption{Dynamic discrete bottlenecks can be implemented as a flat function of continuous input $h$ (center), or as an iterative function that produces progressively coarser outputs (right). Bottleneck functions $q_t$ for $t \in [N]$, have different capacity and separate parameters. One bottleneck is selected for each input using key-value attention between representations of the input and discretization functions, $f(h)$ and $k_t$ respectively.}
    \label{fig:DynamicBottleneckingBoth}
\end{figure*}

\subsection{Bottleneck tightness}
In order to learn a communication bottleneck
with as few bits as necessary for minimizing training error,
we introduce pressure to choose $q_t$ with low $G_t$ and $L_t$. This pressure is implemented with capacity penalty $\mathcal{C}_{bottlenecking}$: 
\begin{align}
    &\mathcal{C}_{bottlenecking} = G_t \ln(L_t) \\
    \mathcal{L} = \mathcal{L}_{task} &+ \alpha \mathcal{L}_{discretization} +\beta \mathcal{C}_{bottlenecking}
\end{align}
where $\mathcal{L}$ is the overall loss for the representation being discretized, $\mathcal{L}_{task}$ is the task loss, $\mathcal{L}_{discretization}$ is the sum of commitment and codebook losses from the discretization process and $\mathcal{C}_{bottlenecking}$ is the bottlenecking cost. $\alpha$ and $\beta$ are hyperparameters that are chosen using a validation set. 

In our experiments, we found that including a continuous-valued function in the pool, corresponding to expressivity in the limit of codebook size $L_t \rightarrow \infty$, improved performance in some tasks. In this case $L_t$ was set to be a large number ($10^9$) in the penalty term $\mathcal{C}_{bottlenecking}$.


\subsection{Architectural choices}

Given continuous representation $h$, the bottleneck can be enforced in either a flat or hierarchical manner (\cref{fig:DynamicBottleneckingBoth}). 
In the flat case, the bottlenecked representation is the output of a single $q_t \in Q$ selected with key-value attention.
In the hierarchical case, all functions $\{q_t\}_{t \in [N]}$ are utilized by first ordering them in order of descending $G_t$, and setting the input segments of each function to be concatenated factors produced by the previous function, with $h$ as input into the first function.
Then output of a single $q_t \in Q$ is selected with key-value attention.




\section{Theoretical analysis}
 In this section, we  show that the adaptive discretization process has a potential advantage in improving the performance of the final model by better trading off the balance between the generalization and expressivity adaptively for each input. Moreover, our analysis predicts   that the additional regularization, $C_{bottlenecking}$, in our algorithm plays an important role for this tradeoff, and that the number of adaptive bottlenecks $N$ cannot be arbitrarily large. To achieve this goal, we follow  the abstract framework of \citet{liu2021discrete}. That is, let  $\phi$ be an arbitrary function, which  can refer to the composition of an evaluation criterion and the rest of the network following (adaptive) discretization bottlenecks.  Given any function $\phi: \RR^{m} \rightarrow \RR$ and any family of sets $S=\{S_1,\dots,S_K\}$ with $S_1,\dots,S_K \subseteq \Hcal$,  let us define the corresponding function $\phi_{k}^S$ by $\phi_{k}^S(h)=\one\{h \in S_k\}\phi(h)$ for all $k \in [K]$, where $[K]=\{1,2,\dots,K\}$.
Let $e^{(t)} \in  \RR^{L_t \times (m/G_t)}$ be fixed and we denote by $(Q_{k_{t}}^{(t)})_{k_{t} \in [L_{t}^{G_t}]}$ all the possible values after the discretization process for $t$-th adaptive bottleneck: i.e.,  $q(h,L_{t},G_{t})\in \cup_{k _{t}\in [L_{t}^{G_t}]} \{Q_{k_t}^{(t)}\}$ for all $h \in \Hcal$. We define $k_{\max}= \max_{t \in [N]} L_{t}^{G_t}$ and $Q_{k_t}^{(t)}=\emptyset$ for all $k_t > L_{t}^{G_t}$. Under this abstract setting with $t=N=1$, the previous paper \citep{liu2021discrete} proved their main theoretical result, showing that the models with non-adaptive discretization process have advantages over the continuous models without it (we present a slightly tighter version of the previous paper's results in Appendix \ref{app:proof}). Thus, in this section, we focus on the  comparison of  adaptive and non-adaptive discretization processes. 

To analyze the adaptive discretization process, we  introduce the additional notation: let $q_t(h)$ be the discretization process with a particular bottleneck $t \in [N]$,  and $q(h)$ be the whole discretization process as $q(h) = q_{\hht(h)} (h)$ where $\hht(h)$ is the result of the key-value attention. Define $I_{t}=\{i\in[n]: \hht(h_{i})=t\}$, which is the set of  the indices of training samples that end up using the $t$-th adaptive bottleneck.

The following theorem extends the main theorem from \citet{liu2021discrete} to the setting of adaptive bottlenecking and shows the advantage of the adaptive version over the non-adaptive version:

\begin{restatable}{theorem}{thma} \label{thm:1}
Let $N \in \NN_+$  and $S_{k}=\{Q_{k}^{(t)}\}_{t=1}^N$ for all $k \in [k_{\max}]$. Then, for any $\delta>0$, with probability at least $1-\delta$ over an iid draw of $n$ examples $(h_{i})_{i=1}^n$, the following holds for any $\phi: \RR^{m} \rightarrow \RR_{+}$ and $k \in [k_{\max}]$: 
\begin{align} \label{eq:thm:1:1}
 \EE_{h}[\phi_{k}^S(q(h))] &\le\frac{1}{n}\sum_{i=1}^n\phi_{k}^S(q(h_{i}))+\alpha (\mathcal{J}_{1} +\mathcal{J}_{2}),
\end{align}
where   $\alpha =\sup_{h \in \Hcal} \phi_k^S(h)$, 
\begin{align*}
&\mathcal{J}_{1} =\sum_{t=1}^N \frac{|I_{t}|}{n}  \sqrt{\frac{G_{t}\ln(L_{t})+\ln(N/\delta)}{2n}}, \text{ and }
\\ &  \mathcal{J}_{2}=\one\{N\ge 2\}\sqrt{\frac{2N \ln2 + 2 \ln(1/\delta)}{n}}.
\end{align*}
\end{restatable}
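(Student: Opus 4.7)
The approach is to decompose the generalization gap by bottleneck and bound two distinct sources of error separately. Since $q(h)=q_{\hht(h)}(h)$ and $S_k=\{Q_k^{(t)}\}_{t=1}^N$, expanding the indicator gives
\[
\phi_k^S(q(h)) = \sum_{t=1}^N \one\{\hht(h)=t\}\,\phi(Q_k^{(t)})\,\one\{q_t(h)=Q_k^{(t)}\},
\]
so the generalization gap equals $\sum_t \phi(Q_k^{(t)})(P_{t,k}-\hat P_{t,k})$ with $P_{t,k}=\Pr[\hht(h)=t,\,q_t(h)=Q_k^{(t)}]$ and $\hat P_{t,k}$ its empirical analogue.

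\textbf{Decomposition into in-bucket and bucket-choice errors.} Let $P_t=\Pr[\hht(h)=t]$, $\hat P_t=|I_t|/n$, and let $p_{k\mid t}$, $\hat p_{k\mid t}$ denote the corresponding conditional probabilities. The identity
\[
P_{t,k}-\hat P_{t,k} = \hat P_t\,(p_{k\mid t}-\hat p_{k\mid t}) + (P_t-\hat P_t)\,p_{k\mid t}
\]
splits the gap into $A_1+A_2$. For the in-bucket term $A_1=\sum_t\phi(Q_k^{(t)})\hat P_t(p_{k\mid t}-\hat p_{k\mid t})$, I condition on the partition $(I_t)_t$: within each bucket $t$ the samples $\{h_i:i\in I_t\}$ are i.i.d.\ from the conditional law of $h$ given $\hht(h)=t$, and $q_t(h)$ takes at most $L_t^{G_t}$ values, so Hoeffding's inequality with a union bound over the $L_t^{G_t}$ conditional outputs and over $t\in[N]$ controls $|p_{k'\mid t}-\hat p_{k'\mid t}|$ uniformly in $(t,k')$. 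Weighting the resulting bound by $\hat P_t=|I_t|/n$ and using $\phi(Q_k^{(t)})\le\alpha$ yields $|A_1|\le\alpha\mathcal{J}_1$.

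\textbf{Bucket-choice concentration.} For $A_2=\sum_t\phi(Q_k^{(t)})p_{k\mid t}(P_t-\hat P_t)$, the bound $\phi(Q_k^{(t)})p_{k\mid t}\le\alpha$ gives $|A_2|\le\alpha\|P-\hat P\|_1$, where $P,\hat P$ are the population and empirical marginal distributions of $\hht(h)$ on $[N]$. Writing $\|P-\hat P\|_1=2\sup_{T\subseteq[N]}(P(T)-\hat P(T))$ and applying Hoeffding to each subset $T$ with a union bound over the $2^N$ subsets of $[N]$ (which contributes only when $N\ge 2$, explaining the indicator $\one\{N\ge 2\}$ in $\mathcal{J}_2$) yields $|A_2|\le\alpha\mathcal{J}_2$. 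A final union bound over the two high-probability events, each of probability at least $1-\delta/2$, gives the stated inequality.

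\textbf{Main obstacle.} The central difficulty is that the attention function $\hht$ is learned from the training data, so the partition $(I_t)_t$ is random and $|I_t|$ need not equal $nP_t$. Both concentration arguments must therefore be uniform in their outcomes---over all $(t,k')\in[N]\times[L_t^{G_t}]$ in the in-bucket argument and over all subsets $T\subseteq[N]$ in the bucket-choice argument. A further bookkeeping step is the algebraic reduction $\hat P_t(p_{k\mid t}-\hat p_{k\mid t})=\hat P_tp_{k\mid t}-\hat P_{t,k}$, which rewrites each in-bucket deviation as a zero-mean bounded average of $n$ statistics to which Hoeffding can be applied directly, matching the specific constants appearing in $\mathcal{J}_1$.
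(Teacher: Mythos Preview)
Your proposal follows the same route as the paper: decompose the gap into a bucket-choice term governed by $\sum_t|P_t-\hat P_t|$ and an in-bucket term $\sum_t\hat P_t\,\phi(Q_k^{(t)})(p_{k\mid t}-\hat p_{k\mid t})$, then control the latter by Hoeffding with a union bound over $t\in[N]$ and $k\in[L_t^{G_t}]$. The only differences are cosmetic---the paper cites the Bretagnolle--Huber--Carol inequality for the multinomial $\ell_1$ deviation, whereas your $2^N$-subset Hoeffding argument is one standard proof of that inequality and produces the identical constant---together with one minor misstatement: in the theorem's setup the codebooks $e^{(t)}$, and hence the attention map $\hht$, are taken to be \emph{fixed} rather than learned from the training sample, so the randomness of $(I_t)_t$ comes solely from the i.i.d.\ draw and no additional uniformity over $\hht$ is needed.
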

\begin{proof}
The proof is presented in Appendix \ref{app:proof}. 
\end{proof}
Theorem \ref{thm:1} recovers the previous result of \citep{liu2021discrete} when we set $N=1$ as desired. That is, by setting $N=1$, the inequality \eqref{eq:thm:1:1} in Theorem \ref{thm:1} becomes
\begin{align} 
 \EE_{h}[\phi_{k}^S(q(h))] &\le\frac{1}{n}\sum_{i=1}^n\phi_{k}^S(q_{1}(h_{i}))
 \\ \nonumber & \quad +  \alpha    \sqrt{\frac{G_{1}\ln(L_{1})+\ln(1/\delta)}{2n}},
\end{align}
which is the previous bound in Theorem 1 of \citet{liu2021discrete}. Thus, we successfully generalized the previous analysis framework to cover both the adaptive and  non-adaptive versions in an unified manner.

Theorem \ref{thm:1} shows that there are two ways that the adaptive version can improve the non-adaptive version; i.e., the potential improvement happens when $\sum_{t=1}^N \frac{|I_{t}|}{n}\sqrt{G_{t} \ln (L_t)} < \sqrt{G_{1} \ln(L_1)}$ or $\frac{1}{n}\sum_{i=1}^n\phi_{k}^S(q(h_{i})) <\frac{1}{n}\sum_{i=1}^n\phi_{k}^S(q_{1}(h_{i}))$.
The first criterion is met when the weighted average of the adaptive bottleneck sizes $G_{t} \ln (L_t)$ is smaller than the pre-fixed bottleneck size $G \ln(L)=G_{1} \ln(L_1)$. This is indeed encouraged by the additional regularization, $C_{bottlenecking}$, in our algorithm. The second criterion is satisfied when  the training loss with adaptive bottlenecks is less   than that with a fixed bottleneck. 

Theorem \ref{thm:1} provides the insight that the benefit of the adaptive bottlenecks
lies in the tradeoff between the expressivity (to minimize the training loss $\frac{1}{n}\sum_{i=1}^n\phi_{k}^S(q(h_{i}))$) and generalization (to minimize the term $\sum_{t=1}^N \frac{|I_{t}|}{n}\sqrt{G_{t} \ln (L_t)}$). 
For a fixed bottleneck, the training loss tends to decrease as $G_{1}$ increases because increasing $G_{1}$ improves the expressivity and trainability. However, increasing $G_{1}$ results in a worse bound on the generalization gap as the gap scales as $\sqrt{G_{1}/n}$ in Theorem 1. Thus, we have a tradeoff between the expressivity and generalization. For the adaptive  bottlenecks, different values of $G_{t}$ are used for different samples. As a result,
the adaptive  bottlenecks can have a better tradeoff
between expressivity and generalization by only using the necessary expressivity or bottleneck $G_t$ (and $L_t$) for each sample to reduce $\sum_{t=1}^N \frac{|I_{t}|}{n}\sqrt{G_{t} \ln (L_t)}$ while minimizing the training loss $\frac{1}{n}\sum_{i=1}^n\phi_{k}^S(q(h_{i}))$.

For example, consider a scenario with  a subset of training samples that require $G$ and $L$ to be extremely large to minimize the training loss for the subset. If we use a pre-fixed bottleneck, we need to make $G_{1}\ln(L_1)$ to be extremely large to minimize all training samples. This results in a bad generalization  term  $\sqrt{G_{1} \ln (L_{1})/n}$ in Theorem 1.
On the other hand, if we use the adaptive bottleneck, we can minimize the training loss  for all samples by  using a large $G_{t} \ln (L_t)$ only for the subset while using small values of $G_{t} \ln (L_t)$ for other samples.

In terms of these two criteria for the tradeoff,  the adaptive version seems to be always  better as we increase $N$. However, this better tradeoff comes with a cost. In Theorem \ref{thm:1}, the cost is captured by the additional term $\sqrt{\frac{2N \ln2 + 2 \ln(1/\delta)}{n}}$, which increases as $N$ increases. Thus, while the adaptive version is better in the  sense of the tradeoff of the two criteria, it comes with the additional cost of $\sqrt{N/n}$ term. This predicts that $N$ cannot be arbitrarily large.


\section{Related Work}


\paragraph{Vector Quantization.}
VQ is motivated by the fundamental result of Shannon’s rate-distortion theory \citep{VQ_signal_compression,element_info}: better performance can always be achieved by coding vectors instead of scalars, even if the sequence of source symbols are independent random variables. K-means \cite{macqueen1967some} is the prime method for VQ. The K-means algorithm clusters data by trying to separate the samples into $n$ groups of equal variance, this by minimizing the intra-class inertia. Despite the performance of this algorithm, K-means based VQ has an exponential complexity in encoding (computation and memory) and decoding (memory) \cite{Tan2018DeepVQAD}. Various improvements to K-means have been proposed to address complexity issues (e.g., product quantization) \cite{6678503}, but their performance is suboptimal compared to some deep neural network (DNN) based approaches \cite{Tan2018DeepVQAD}, where the idea is to map input data from the original high dimensional space to the DNN latent space with lower dimensionality, and apply K-means to the latent codes. Because K-means in the latent space is sub-optimal for VQ, \citet{Tan2018DeepVQAD} proposed DeepVQ, a fully-DNN architecture for vector quantization, in the context of data compression. DeepVQ is an autoencoder that overcomes the complexity issue by directly mapping to the binary index of the codeword.
Recent works \cite{rolfe2017discrete, maddison2017concrete} proposed novel reparameterization methods to handle the non-gradient issue for discrete random variables in VAE. VQ-VAE \cite{oord2017neural} avoids such problem by using the identity function, namely copying gradients from the decoder input to encoder output \cite{bengio2013estimating}.


\paragraph{Bottlenecking inter-module communication within a model.}

Many methods have been used in recent years to enable efficient communication between specialized components of machine learning models, from attention mechanisms for selectively communicating information between specialized components in machine learning models \cite{goyal2019recurrent,goyal2021coordination,goyal2021neural} and transformers \cite{vaswani2017attention,lamb2021transformers}; collective memory and shared parameters for multi-agent communication \cite{Garland1996MultiagentLT,Pesce2020}, node attributes in graph-based models \cite{10.5555/1795555,battaglia2018relational} for relational reasoning, dynamical systems simulation, multi-agent systems, and in many other areas. While most of inter-specialist communication mechanisms operates in a pairwise symmetric manner, \citet{goyal2021coordination} introduced a bandwidth limited communication channel to allow information from a limited number of modules to be broadcast globally to all modules, inspired by Global workspace theory \cite{Baars2019}. Recently, \citet{liu2021discrete} showed that replacing a continuous representation with a discrete representation, and limiting the discrete representation to a short list of codes from a small codebook, both improve generalization guarantees. Following VQ-VAE \cite{oord2017neural}, \citet{liu2021discrete} proposed discrete-valued neural communication (DVNC) to improve systematic generalization in a variety of architectures, including transformers \cite{vaswani2017attention,lamb2021transformers}, RIMs \cite{goyal2020recurrent} , and graph neural networks \cite{kipf2020contrastive}.


\paragraph{Bottlenecking inter-agent communication in multi-agent RL.}

A wide range of multi-agent applications have benefitted from inter-agent message passing including distributed smart grid control \cite{4840087}, consensus in networks \cite{5978201}, multi-robot control \cite{ren2008}, autonomous vehicle driving \cite{Petrillo2018}, elevators control \cite{10.1023/A:1007518724497} and for language learning in two-agent systems \cite{lazaridou2017multiagent}. An important challenge in MARL is how to facilitate communication among interacting agents, especially in tasks requiring synchronization \cite{SCARDOVI20092557, wen2012}. For example, in the multi-agent deep deterministic policy gradient \cite{lowe2020multiagent}, which extends the actor-critic algorithm \cite{degris2013offpolicy}; the input size of each critic increases linearly with the number of agents, which hinders its scalability \cite{jiang2018learning}. To overcome this, \citet{Pesce2020} provides the agents with a shared communication device that can be used to learn from their collective private observations and share relevant messages with others in the centralised learning and decentralised execution paradigm \cite{foerster2016learning, KRAEMER201682, 10.5555/1622673.1622680}. However, their approach is limited to small-scale systems.  \citet{iqbal2019actorattentioncritic} proposed Multi-Actor-Attention-Critic to learn decentralised policies with centralised critics, which selects relevant information for each agent at every time-step through an attention mechanism. Many other communication mechanisms have been proposed, such as CommNet \cite{sukhbaatar2016learning}, IC3NEt \cite{singh2018learning},  BiCNet \cite{peng2017multiagent}, attention \cite{jiang2018learning} and soft-attention \cite{das2020tarmac} based, master-slave architecture \cite{kong2017revisiting}, Feudal Multiagent Hierarchies \cite{ahilan2019feudal}, Bayesian Action Decoder \cite{foerster2019bayesian}.


\section{Experiments}
In our experiments we test the hypothesis that imposing a dynamic bottleneck on communication facilitates modularization and improved generalization compared to a fixed bottleneck.  
We test our method in two settings, 1) inter-component communication among different components in the same model and  2) inter-agent communication in cooperative multi-agent reinforcment learning (MARL).

\subsection{Experimental setup}

\paragraph{Discretize inter-module communication within a model.} We explore the effects of using DVQ to bottleneck inter-module communication for visual reasoning tasks. The tasks are to predict object movement in a grid world (referred to as \textbf{``Shapes''}) and 7 different Atari game environments. In all of the environments, changes in each image frame depend on the previous image frame and the actions applied to different objects. Objects are captured by a convolutional neural network and are passed to a Graph neural network (GNN). Positions of objects are captured by nodes in the GNN and the relative positions among different objects are communicated through the network a discrete manner \citep{kipf2020contrastive}. In this work we apply either DVQ or VQ to discretize the vector sum of edges each node is connected to.

\paragraph{Bottleneck communication among reinforcement learning agents.}
To investigate the potential of using DVQ to bottleneck communication among different reinforcement learning agents, we conducted experiments in two MARL environments with cooperative tasks. Multi-Agent Particle Environment is a 2D cooperative MARL environment originally proposed in the MADDPG paper \cite{lowe2020multiagent}. There are 3 agents and 3 landmarks in a 2D space. In this study, we use the cooperative navigation task (called \textbf{``SimpleSpread''}) where agents are required to cover all landmarks while minimizing inter-agent collisions. The action space is discrete and agents can move either up, down, left or right in addition to a no-move ``action''. Each agent only has a partial view of the environment.
In this work, we allow agents to encode their partial view and send it as messages to other agents.  We apply discretization bottlenecks on the messages each agent receives. The second MARL environment we use is GhostRun, which is an adaptation the environment available from \citet{shuo2019maenvs}. The environment consists of multiple agents, each with a partial view of the ground below them. There are multiple ghosts (red dots) moving randomly in the environment. All the agents work as a team to escape from the ghosts. Once an agent has ghosts in its view, the team receives negative rewards which are multiplied by the total number of ghosts in the joint view of all agents. Similar to the Particle environment, we allow agents to communicate with each other by sending their encoded partial view as messages, and we discretize the messages received by each agent.

\begin{figure}
    \centering

    \includegraphics[width=0.40\linewidth]{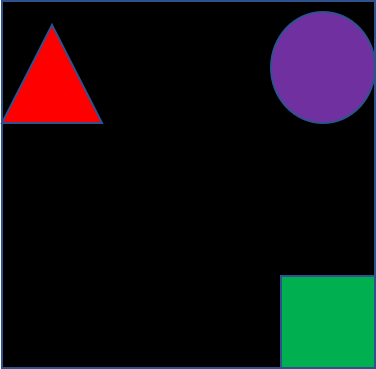}
    \includegraphics[width=0.355\linewidth]{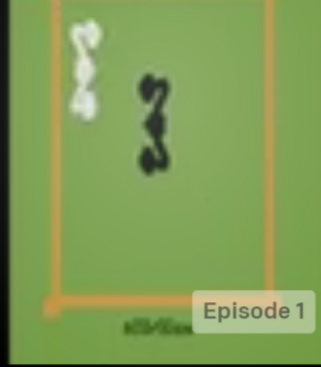}
    \includegraphics[width=0.40\linewidth]{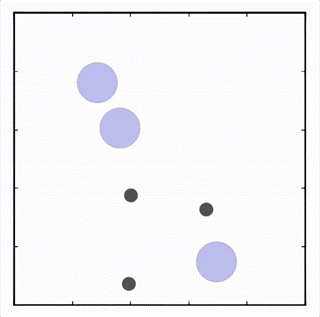}
    \includegraphics[width=0.40\linewidth]{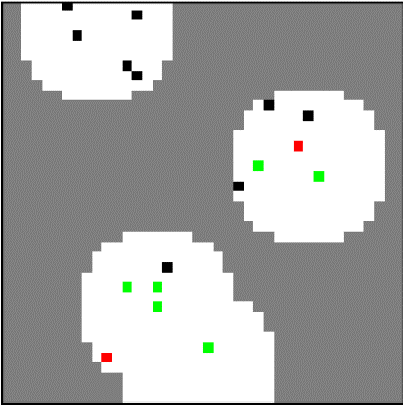}
    \caption{Two visual reasoning tasks, grid word (top left) and 7 Atari games (top right), are used to analyze effects of bottlenecking inter-module communication within a model using DVQ. Two MARL tasks, Particles (bottom left) and GhostRun (bottom right), are used to investigate the effect of bottlecking inter-agent communication using DVQ.}
    \label{fig:MARLParticle}
\end{figure}

\subsection{Inter-module communication}
The bottlenecks were applied on edges of graphs in the GNN linking nodes that represent objects in the image  \cite{liu2021discrete}. The visual reasoning tasks require the model to forecast future scenes in the environments based on current state and actions if available. 
Out of the 8 visual reasoning tasks, DVQ outperforms fixed-bottleneck VQ in 7 tasks (figure \ref{table:result_iid}), with the adaptive hierarchical architecture best on average.


\begin{table*}
\label{table:result_iid}
\caption{Performance of different methods in bottlenecking inter-module communication in a visual reasoning model}
\centering
		\arrayrulecolor{black}
		\begin{tabular}{!{\color{black}\vrule}c!{\color{black}\vrule}c!{\color{black}\vrule}c!{\color{black}\vrule}c!{\color{black}\vrule}c!{\color{black}\vrule}}
		\arrayrulecolor{black}\hline
				Task/Model           & \textbf{Original} & \textbf{Quantization} & \textbf{Adaptive Quantization} & \textbf{Adaptive Hierarchical} \\ \hline
				Alien         & 0.130 $\pm$ 0.023 & 0.152 $\pm$ 0.026 & 0.170 $\pm$ 0.075 & \textbf{0.177 $\pm$ 0.057} \\ \hline
				BankHeist     & 0.397 $\pm$ 0.043 & 0.371 $\pm$ 0.057 & 0.406 $\pm$ 0.037 & \textbf{0.414 $\pm$ 0.084} \\ \hline
				Berzerk       & 0.436 $\pm$ 0.250 & 0.584 $\pm$ 0.011 & \textbf{0.630 $\pm$ 0.016} & 0.580 $\pm$ 0.021 \\ \hline
				Boxing        & 0.873 $\pm$ 0.021 & 0.908 $\pm$ 0.068 & 0.929 $\pm$ 0.031 & \textbf{0.957 $\pm$ 0.041} \\ \hline
				MsPacman      & \textbf{0.152 $\pm$ 0.037} & 0.135 $\pm$ 0.030 & 0.054 $\pm$ 0.002 & 0.057 $\pm$ 0.005 \\ \hline
				Pong          & 0.169 $\pm$ 0.047 & 0.201 $\pm$ 0.035 & 0.205 $\pm$ 0.068 & \textbf{0.225 $\pm$ 0.031} \\ \hline
				shapes        & 0.674 $\pm$ 0.055 & 0.672 $\pm$ 0.053 & 0.664 $\pm$ 0.034 & \textbf{0.692 $\pm$ 0.065} \\ \hline
    		    SpaceInvaders & 0.138 $\pm$ 0.037 & 0.199 $\pm$ 0.085 & \textbf{0.258 $\pm$ 0.103} & 0.232 $\pm$ 0.076 \\ \hline
		\end{tabular}  
\end{table*}

\begin{figure*}
    \centering
    \includegraphics[width=0.45\linewidth]{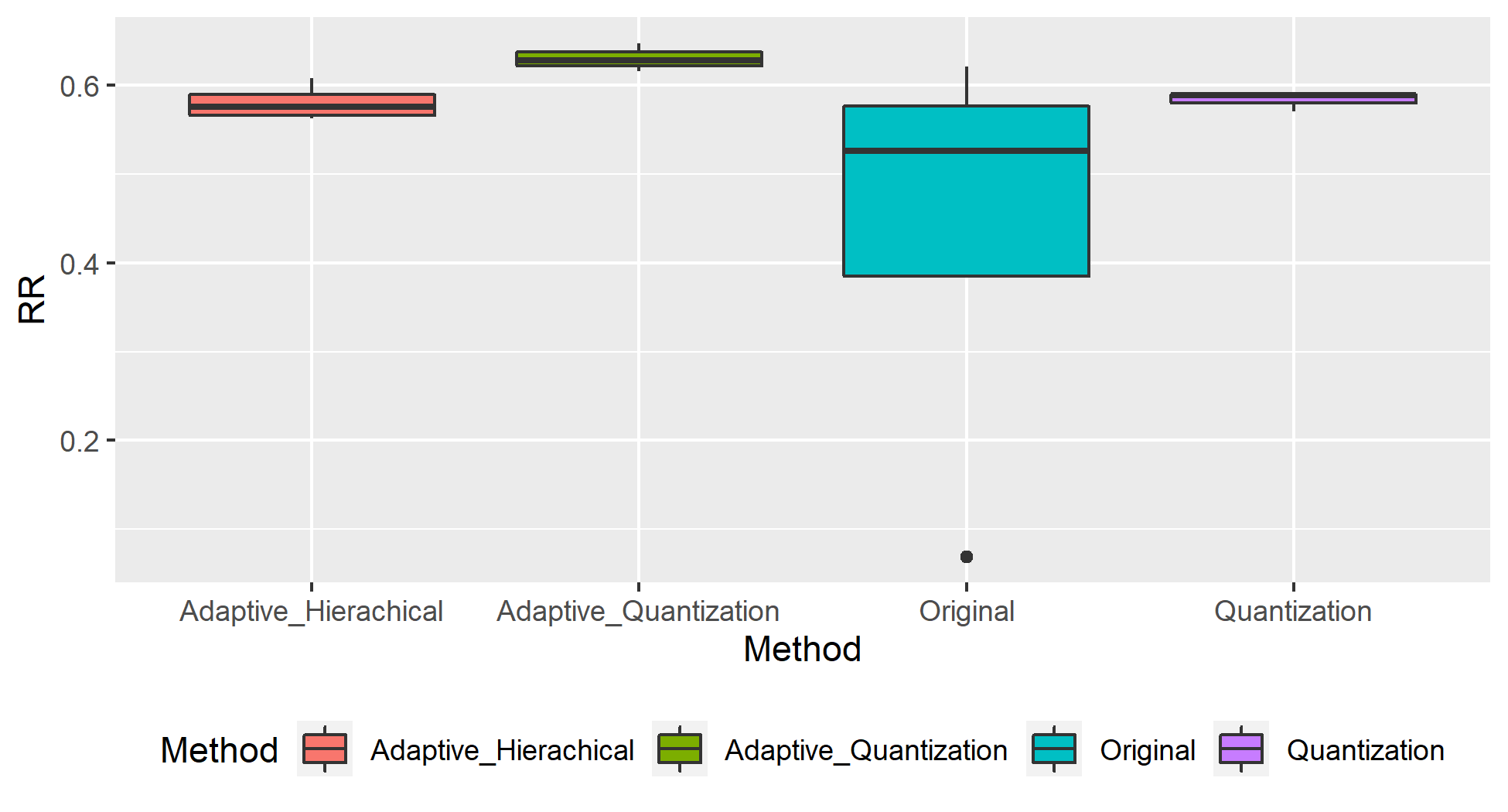}
    \includegraphics[width=0.45\linewidth]{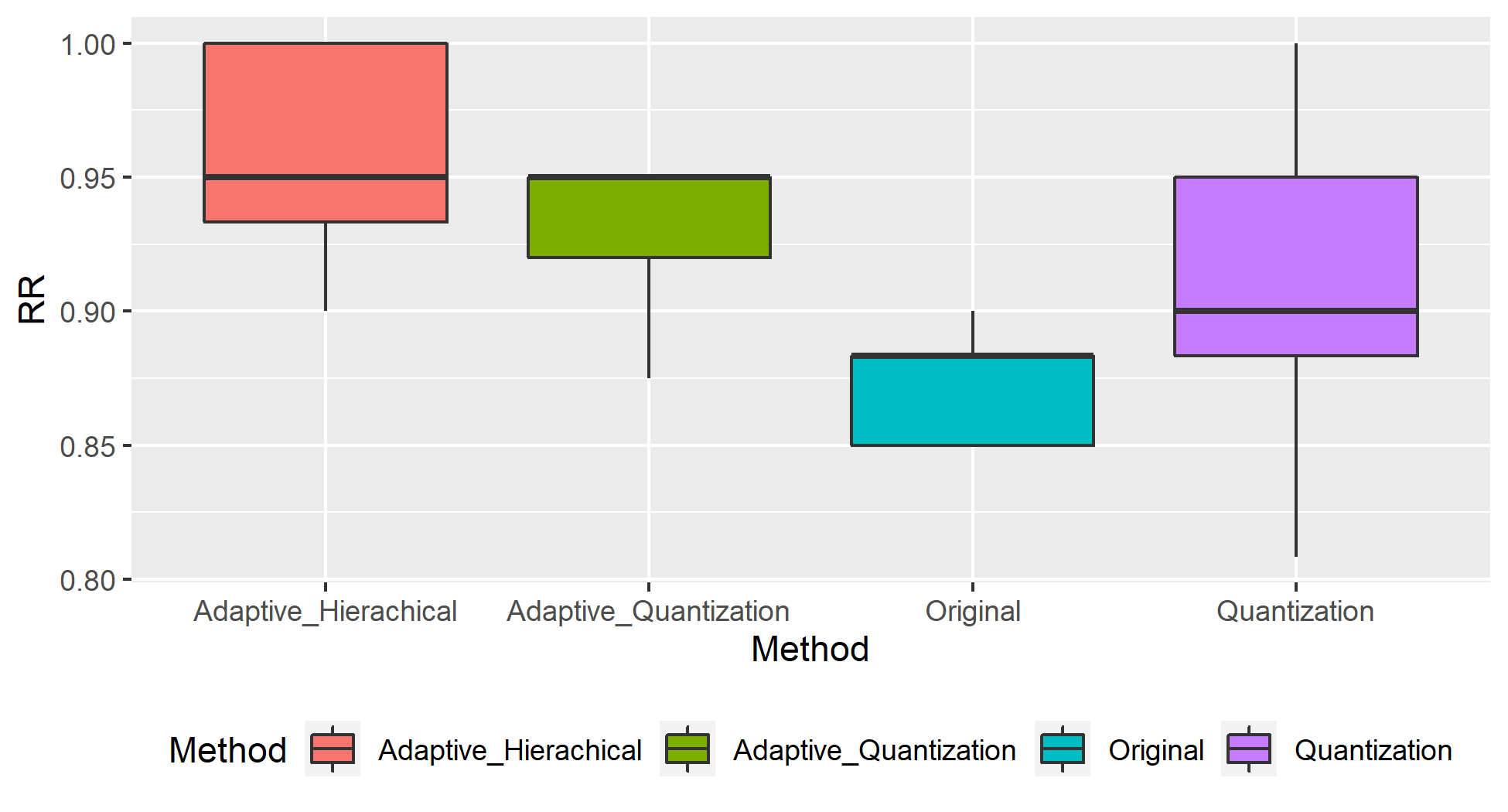}
    \caption{Bottlenecking communication among components in a model in visual reasoning task. This figure shows examples of two games. Left panel: performance of different methods in Berzerk game. Right panel: performance of different methods in Boxing game }
    \label{fig:VisualReasoning}
\end{figure*}

\subsection{Inter-agent communication in cooperative MARL}
Next we investigate the potential benefits of using DVQ to bottleneck information exchange among multiple agents in two cooperative MARL tasks. Agents communicate with each other in a broadcasting manner where each agent $j$ sends identical messages $m_{j,t}$, the encoded partial observation of agent $j$, to all other team members simultaneously. Each agent in the cooperative game receive $M_t=\{m_{j,t}|0\leq j < N_{agents}\}$ from all agents in the team at the same time and reads $M_t$ using an attention mechanism \mbox{$m'_{i,t} = \mathrm{softmax} \bigg( \frac{q^{m}_{i, t} \vkappa_{t}
   }M*{t}{\sqrt{d_m}}\bigg)$}
where $m'_{i,t}$ is the final message agent $i$ received from others at time step $t$ and $q^{m}_{i, t}=f_q (m_{j,t})$ and 
$\vkappa_{t}=f_m (M*{t})$. Both $f_m$ and $f_q$ are MLP encoders. At each time step, for each agent, $m'_{i,t}$ is discretized. In DVQ, $L \in [16,64,256]$ and $G \in [1,2,4]$.
In the VQ baseline, $L=16$ and $G=1$. In the Particle environment, both dynamic hierarchical quantization and dynamic quantization outperform VQ with fixed coarseness. In the GhostRun environment, dynamic hierarchical quantization outperforms the baseline and dynamic quantization ties with the baseline (Figure~\ref{fig:MARLGhostRun}).

\begin{figure}
    \centering

    \includegraphics[width=0.48\linewidth]{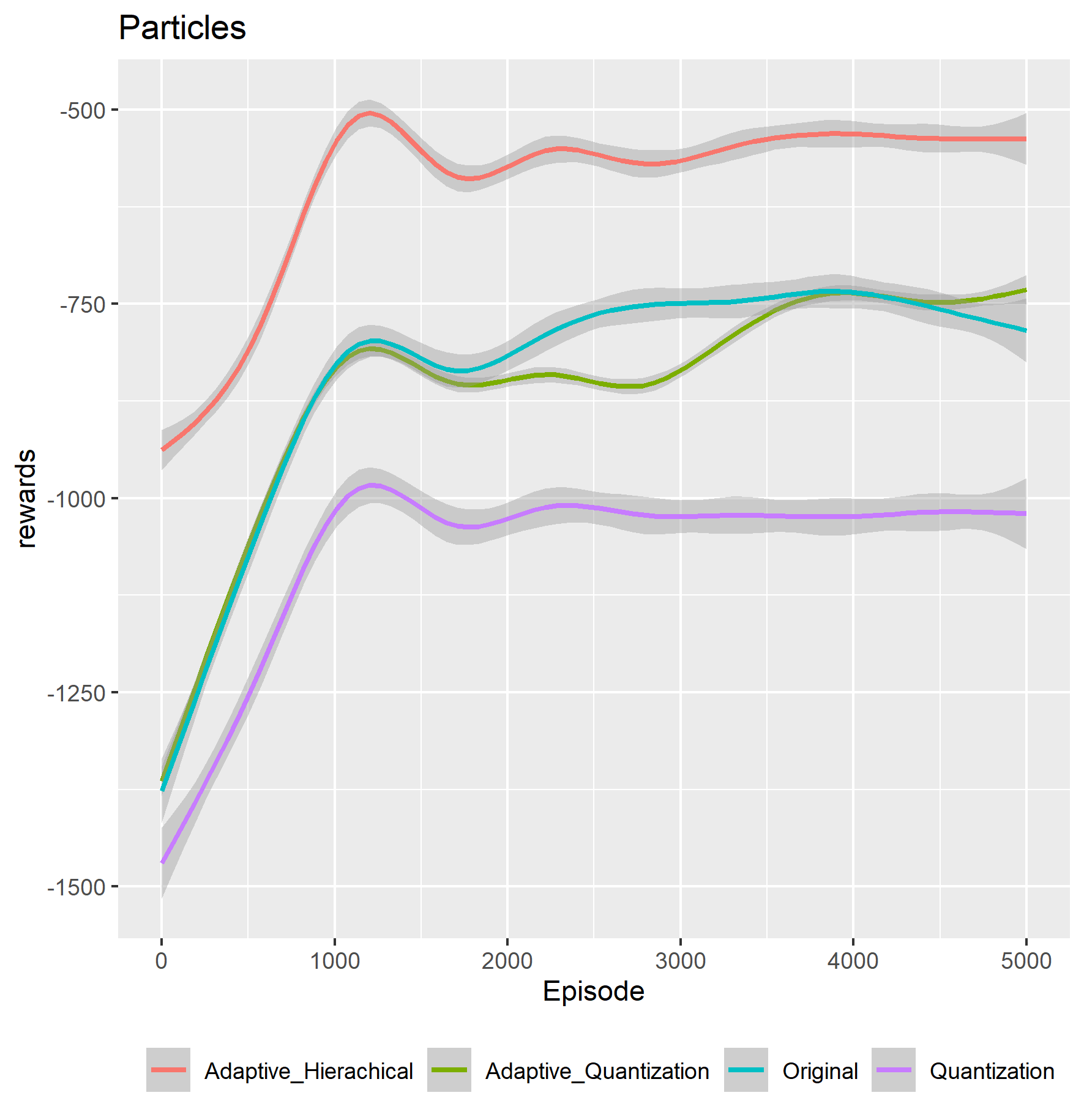}

    \caption{Bottlenecking communication among cooperative agents in Particles environment.  }
    \label{fig:MARLParticle}
\end{figure}

\begin{figure}
    \centering

    \includegraphics[width=0.98\linewidth]{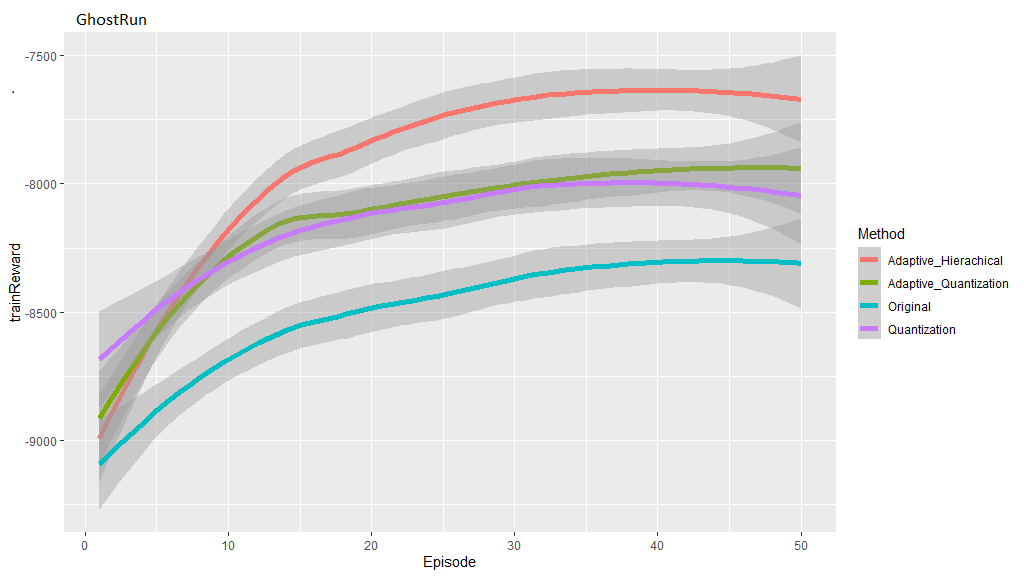}

    \caption{The effect of applying  DVQ to bottleneck communication among cooperative agents in GhostRun environment.  }
    \label{fig:MARLGhostRun}
\end{figure}

\subsection{Task difficulty and bottleneck tightness}
In the sections above, our experiments showed that DVQ outperforms VQ with fixed capacity in various tasks. Next, we seek to understand behaviors of DVQ. In the visual reasoning tasks, difficulty varies significantly among different games. For example, object movement in some Atari games are more unpredictable than others. In addition, even within the same game, difficulty many vary among episodes. This motivates us to ask the question of whether the tightness of bottleneck introduced by DVQ is influenced by how hard a task is.  To answer this, we first quantify difficulty of the task based on the test performance of the visual reasoning model. Reciprocal Ranks (RR) is used as the performance metric where  higher RR means better performance. Next, we break down the capacity penalty into the factor loss, which corresponds to penalty as a result of the number of factors, and codebook size loss, which is the term that penalizes high capacity caused by large codebook sizes. Our analysis shows that both factor loss and codebook size loss have positive correlation with RR, across the 8 tasks (Figure \ref{fig:RRCorLoss}), with stronger correlation in the case of number of factors.
Depending on the direction of causality in this observation, it suggests that tighter bottlenecks are picked for hard tasks either because of difficulty of optimization (when the optimal level of bottleneck tightness is not found, the model tends to undershoot capacity) or because harder tasks call for stronger regularization from tighter bottenecks. 

\begin{figure}
    \centering

    \includegraphics[width=0.8\linewidth]{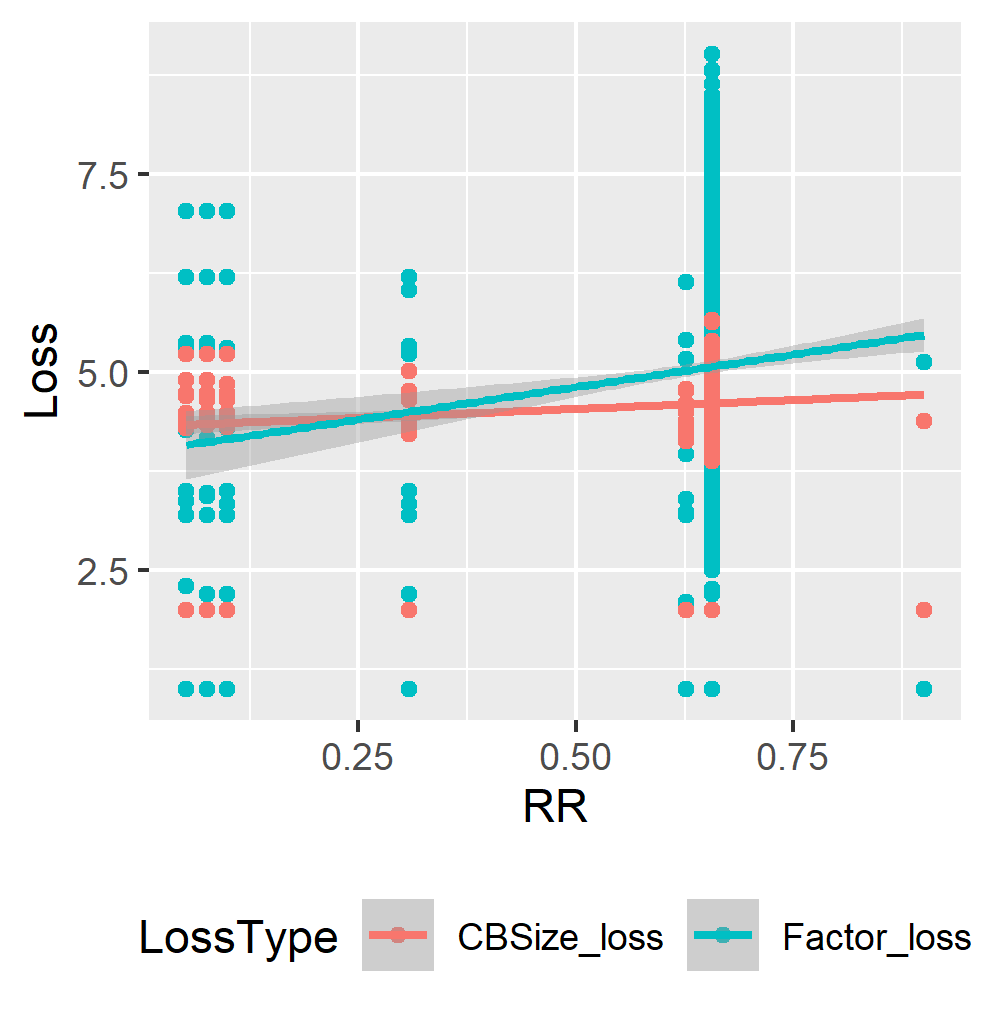}
    
    \caption{Tighter bottlenecks are used in difficult tasks. A positive correlation of both factor size (blue) and codebook size (red) with model performance measured by reciprocal rank (RR). The higher the RR, the better the model performs in the task.}
    \label{fig:RRCorLoss}
\end{figure}

\subsection{Discretization difficulty and bottleneck tightness}
In addition to difficulty of the task itself, another dimension to consider is the difficulty of discretization, which is largely determined by distribution of the learned representation and dynamics of the model. We estimate difficulty of discretization using the discretization loss obtained during evaluation. Recall that the discretization loss is averaged over factors. We observed that DVQ tends to increase its capacity by increasing both the number of factors and codebook size when the discretization loss is high (figure \ref{fig:DisCor}), which is reasonable as high expressivity in inputs puts positive pressure on decreasing the tightness of the bottleneck. 

\begin{figure}
    \centering
    \includegraphics[width=0.45\linewidth]{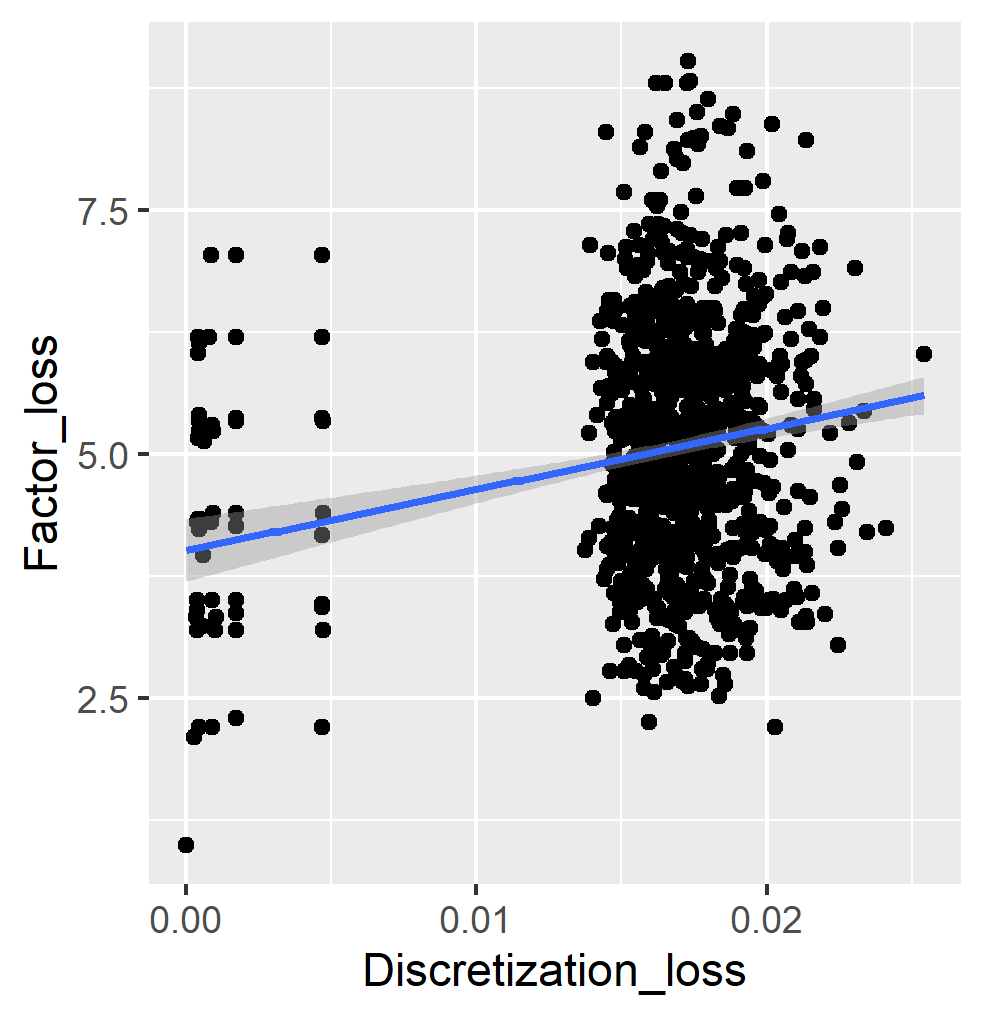}
    \includegraphics[width=0.45\linewidth]{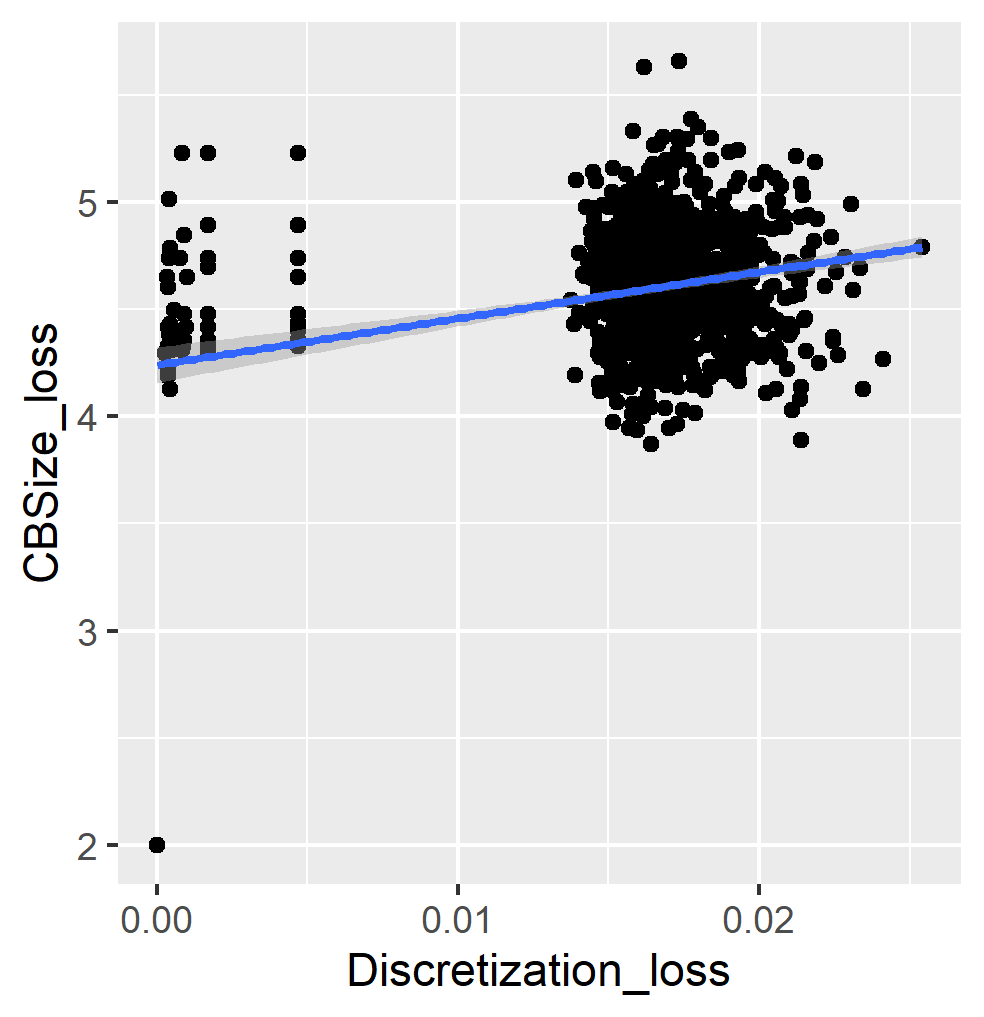}
    \caption{More factors and larger codebook sizes are used by DVQ when discretization is difficult.}
    \label{fig:DisCor}
\end{figure}


\section{Conclusion}

Effective communication between different specialized components of a model or between different agents in a MARL system requires compatible representations and synchronized messages. 
Although communication via continuous, high-dimensional signals is a natural choice given the ease of optimization, recent work has shown that discretization can provide more robust communication. 
Creating a communication bottleneck with pressure to utilize only as few bits as possible needed to coordinate inference improves generalization and imposes disentangling of modules and specialization, which is beneficial for training (learning $N$ skills independently by $N$ experts is easier than learning an organization of knowledge where skills are entangled and all skills interact with each other).
Among these works, those based on VQ depend on the number of discrete codes in the representation vector and the size of the codebook, which are fixed, limiting their adaptability and their ability to fit bottleneck behaviour tightly to data.  

We have shown theoretically and empirically the benefits of using a set of discretization functions with varying levels of output capacity and choosing the function applied for a given input using key-query attention, instead of using a single discretization function that maps all inputs to a discrete space of fixed size.
Adaptively bottlenecking capacity makes tightness dependent on inputs, which allows the generalization gap to be decreased. The experiments show that performance is improved compared to using a fixed capacity bottleneck across a range of tasks in visual reasoning and multi-agent reinforcement learning.



\clearpage
\nocite{langley00}

\bibliography{example_paper}
\bibliographystyle{icml2022}

\newpage
\appendix
\onecolumn


\section{Theoretical Analysis} \label{app:proof}

\subsection{Improvement of previous bounds}

Under the same setting as our analysis, the previous paper \citep{liu2021discrete} proved their main theoretical result for non-adaptive discretization process. In the below, we slightly improve  the previous result: i.e., the  the following two propositions are slightly tighter versions of the main theoretical results from the previous paper \citep{liu2021discrete} on the expected loss:

\begin{restatable}{proposition}{propa} \label{prop:1}
\emph{(with non-adaptive discretization)} Let $S_{k}=\{Q_{k}\}$ for all $k \in [L^{G}]$. Then, for any $\delta>0$, with probability at least $1-\delta$ over an iid draw of $n$ examples $(h_{i})_{i=1}^n$, the following holds for any $\phi: \RR^{m} \rightarrow \RR_{+}$ and all $k \in [L^{G}]$:  
\begin{align} \label{eq:prop:1:1}
 \EE_{h}[\phi_{k}^S(q(h,L,G))]  = \frac{1}{n}\sum_{i=1}^n\phi_{k}^S(q(h_{i}, L,G)) + \Ocal\left( \sqrt{\frac{G\ln(L)+\ln(1/\delta)}{2n}}\right), 
\end{align}
where   $\alpha =\sup_{h \in \Hcal} \phi_k^S(h)$, and no constant is hidden in $\Ocal$.
\end{restatable}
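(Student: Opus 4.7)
The plan is a straightforward combination of Hoeffding's inequality with a union bound over the finite range of the discretization map. First I would fix any $k \in [L^{G}]$ and observe that $\phi_{k}^{S}(q(h, L, G))$ is a random variable taking values in $[0, \alpha]$, where $\alpha = \sup_{h \in \Hcal} \phi_{k}^{S}(h)$, so the $n$ samples $\phi_{k}^{S}(q(h_{i}, L, G))$ form an i.i.d. sequence of bounded real-valued random variables with mean $\EE_{h}[\phi_{k}^{S}(q(h, L, G))]$.

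Next, Hoeffding's inequality gives, for any $\epsilon > 0$,
$$\Pr\!\left[\EE_{h}[\phi_{k}^{S}(q(h, L, G))] - \tfrac{1}{n}\sum_{i=1}^{n} \phi_{k}^{S}(q(h_{i}, L, G)) > \epsilon\right] \le \exp\!\left(-\tfrac{2 n \epsilon^{2}}{\alpha^{2}}\right).$$
Setting the right-hand side equal to $\delta / L^{G}$ and solving for $\epsilon$ produces $\epsilon = \alpha \sqrt{(G \ln L + \ln(1/\delta))/(2n)}$. The counting step here rests on the fact that $q(\cdot, L, G)$ concatenates $G$ segments each chosen from a codebook of size $L$, so its image has cardinality at most $L^{G}$; consequently only $L^{G}$ indices $k$ give a nontrivial singleton $S_{k} = \{Q_{k}\}$, and each of these indices corresponds to a distinct deterministic function of the sample.

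Finally, a union bound over $k \in [L^{G}]$ upgrades the per-$k$ deviation bound to one that holds simultaneously for all $k$ with probability at least $1 - \delta$, which is the claim, with the reverse direction obtained symmetrically if the two-sided interpretation of the $\Ocal(\cdot)$ notation is wanted. The only subtlety, and where I expect the ``slight tightening'' over \citet{liu2021discrete} to live, is in using the one-sided form of Hoeffding together with the $[0, \alpha]$ range of $\phi_{k}^{S}$ (rather than a symmetric $[-\alpha, \alpha]$ range), which trims constants both inside the logarithm and in the Hoeffding exponent. No deeper technical obstacle arises; the argument is standard concentration of measure with careful bookkeeping of the cardinality of the codebook image.
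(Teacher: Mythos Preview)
Your proposal is correct and follows essentially the same route as the paper. The paper's own proof is just a pointer: it says to repeat the argument of Theorem~1 in \citet{liu2021discrete} but replace the two-sided Hoeffding bound on $|p_k-\hat p_k|$ by the one-sided bound on $p_k-\hat p_k$, exactly the tightening you anticipated. The only cosmetic difference is that the paper applies Hoeffding to the Bernoulli indicator $X_i=\one\{q(h_i)=Q_k\}\in[0,1]$ and then multiplies by $\phi(Q_k)\le\alpha$, whereas you apply Hoeffding directly to $\phi_k^S(q(h_i))\in[0,\alpha]$; since $\phi_k^S(q(h))$ is precisely $\phi(Q_k)\cdot\one\{q(h)=Q_k\}$ and $\alpha=\sup_{h\in\Hcal}\phi_k^S(h)=\phi(Q_k)$ here, the two computations coincide.
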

\begin{proof}[Proof of Proposition \ref{prop:1}]
The desired statement follows from the exact same proof of  Theorem 1 of \citep{liu2021discrete} except that we replace the Hoeffding's inequality
on the absolute value of the difference $\left|p_k -\hat p_k\right|$ with that on the difference $(p_k -\hat p_k)$. 
\end{proof}

\begin{restatable}{proposition}{propb} \label{prop:2} 
\emph{ (without discretization)}  Assume that  $\|h\|_2 \le R_{\Hcal}$ for all $h \in \Hcal\subset \RR^{m}$. Fix  $\Ccal\in \argmin_{\bCcal}\{|\bCcal|:\bCcal \subseteq \RR^{m},\Hcal \subseteq \cup_{c \in\bCcal} \Bcal_{}[c]\}$ where $\Bcal[c]=\{x\in \RR^{m} : \|x - c \|_2\le R_{\Hcal}/(2\sqrt{n})\}$. Let $S_{k}=\Bcal[c_{k}]$ for all $k \in [|\Ccal|]$ where $c_k \in \Ccal$ and $\cup_{k} \{c_k\}=\Ccal$.   Then, for any $\delta>0$, with probability at least $1-\delta$ over an iid draw of $n$ examples $(h_{i})_{i=1}^n$, the following holds  for any $\phi: \RR^{m} \rightarrow \RR_{+}$  and  all $k \in [|\Ccal|]$:
if $|\phi_k^S(h)|\le \alpha$ for all $h \in \Hcal$ and 
$|\phi_{k}^{S}(h{})-\phi_{k}^{S}(h{}')|\le \varsigma_{k}\|h -h'\|_2$ for all $h,h' \in S_{k}$, then
\begin{align} \label{eq:prop:2:1}
 \EE_{h}[\phi_{k}^S(h)]  
 = \frac{1}{n}\sum_{i=1}^n\phi_{k}^S(h_{i}) +\Ocal\left(\alpha \sqrt{\frac{m\ln(4  \sqrt{n m})+\ln(1/\delta)}{2n}}+\frac{\bar \varsigma_{k}R_{\Hcal}}{\sqrt{n}}\right), \end{align}
where  $\bar \varsigma_{k}= \varsigma_{k} \left(\frac{1}{n} \sum_{i=1}^n\one\{h_{i}\in \Bcal_{}[c_{k}]\}\right)$, $\alpha =\sup_{h \in \Hcal} \phi_k^S(h)$, and no constant is hidden in $\Ocal$.
\end{restatable}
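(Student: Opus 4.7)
The plan is to apply a standard covering-plus-concentration argument with an explicit construction of $\Ccal$ yielding the precise logarithmic constant. First, I would construct $\Ccal$ as the set of centers of a coordinate grid of spacing $R_\Hcal/\sqrt{nm}$ placed over $[-R_\Hcal,R_\Hcal]^m \supseteq \Hcal$; each grid cell has $\ell_2$-diameter $R_\Hcal/\sqrt{n}$ and therefore fits inside a ball $\Bcal[c]$ of radius $R_\Hcal/(2\sqrt n)$, which gives $|\Ccal|\le(2\sqrt{nm}+1)^m\le(4\sqrt{nm})^m$. This grid construction (as opposed to a naive $\ell_2$ packing, which would give only $(6\sqrt n)^m$) is what supplies the $\sqrt m$ inside the logarithm in the stated bound.

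Second, I would apply Hoeffding's inequality to the Bernoulli random variable $\one\{h\in\Bcal[c]\}$ for each $c\in\Ccal$ and take a union bound over the at most $(4\sqrt{nm})^m$ centers. With probability at least $1-\delta$, simultaneously for all $k$, $p_k-\hat p_k\le\varepsilon:=\sqrt{(m\ln(4\sqrt{nm})+\ln(1/\delta))/(2n)}$, where $p_k:=\Pr[h\in S_k]$ and $\hat p_k:=\frac{1}{n}\sum_i\one\{h_i\in S_k\}$.

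Third, I would approximate $\phi_k^S$ by a scaled indicator: put $a_k:=\phi_k^S(c_k)\in[0,\alpha]$, and observe that the $\varsigma_k$-Lipschitz hypothesis on $S_k$ (together with $c_k\in S_k$ and the vanishing of $\phi_k^S$ outside $S_k$) gives
\[
\bigl|\phi_k^S(h)-a_k\one\{h\in S_k\}\bigr|\le\frac{\varsigma_k R_\Hcal}{\sqrt n}\one\{h\in S_k\}\qquad\text{for every }h\in\RR^m,
\]
since the diameter of $S_k$ is $R_\Hcal/\sqrt n$. Taking expectations and empirical averages and combining with Step~2,
\[
\EE[\phi_k^S(h)]-\frac{1}{n}\sum_{i=1}^n\phi_k^S(h_i)\;\le\;a_k(p_k-\hat p_k)\;+\;\frac{\varsigma_k R_\Hcal}{\sqrt n}\bigl(p_k+\hat p_k\bigr),
\]
which is bounded by $\alpha\varepsilon+2\bar\varsigma_k R_\Hcal/\sqrt n$ up to a lower-order cross term $\varsigma_k R_\Hcal\varepsilon/\sqrt n$ that is absorbed into $\Ocal$. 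This reproduces the stated bound with $\bar\varsigma_k=\varsigma_k\hat p_k$.

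The main obstacle is the bookkeeping of constants, particularly calibrating the cover so that the argument of the logarithm comes out exactly to $4\sqrt{nm}$ (the coordinate-grid construction is essential here) and verifying that the residual cross term $\varsigma_k R_\Hcal\varepsilon/\sqrt n$ is genuinely lower order. A minor technicality is that $\phi_k^S$ has a jump at $\partial S_k$, but because it vanishes outside $S_k$ and the Lipschitz hypothesis is imposed only on $S_k$, this never enters the decomposition.
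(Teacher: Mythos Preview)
The paper's own proof is a one-line pointer: it invokes Theorem~2 of \citet{liu2021discrete} verbatim, replacing the two-sided Hoeffding bound on $|p_k-\hat p_k|$ by the one-sided bound on $p_k-\hat p_k$. Your reconstruction follows exactly that strategy---grid cover to bound $|\Ccal|\le(4\sqrt{nm})^m$, one-sided Hoeffding plus a union bound on the indicators $\one\{h\in S_k\}$, then a Lipschitz argument to pass from indicators to $\phi_k^S$---and your covering computation is correct.

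The one real gap is the cross term in your third step, which is \emph{not} lower order in general. If $\hat p_k=0$ then $\bar\varsigma_k=0$ and your inequality reduces to $\alpha\varepsilon+\varsigma_k R_\Hcal\varepsilon/\sqrt n$, strictly exceeding the target $\alpha\varepsilon$ whenever $\varsigma_k>0$; since the proposition explicitly says ``no constant is hidden in $\Ocal$'' this cannot be absorbed. The problem is that bounding $\EE[\phi_k^S]$ from above and $\tfrac1n\sum_i\phi_k^S(h_i)$ from below \emph{separately} double-counts the Lipschitz error. The clean decomposition is
\[
\EE[\phi_k^S(h)]-\frac1n\sum_{i}\phi_k^S(h_i)\;=\;\mu_k\,(p_k-\hat p_k)\;+\;\hat p_k\,(\mu_k-\hat\mu_k),
\]
where $\mu_k=\EE[\phi(h)\mid h\in S_k]$ and $\hat\mu_k$ is its empirical analogue on $\{i:h_i\in S_k\}$. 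The first summand is at most $\alpha\varepsilon$; for the second, both $\mu_k$ and $\hat\mu_k$ lie within $\varsigma_k R_\Hcal/(2\sqrt n)$ of $\phi(c_k)$, so $\hat p_k(\mu_k-\hat\mu_k)\le\hat p_k\cdot\varsigma_k R_\Hcal/\sqrt n=\bar\varsigma_k R_\Hcal/\sqrt n$. (When $\hat p_k=0$ the second summand vanishes and $p_k\le\varepsilon$ handles the first.) This is precisely the add-and-subtract pattern used in the paper's proof of Theorem~\ref{thm:1}---see eq.~\eqref{eq:14}, where in the discrete case the second summand is identically zero---and it delivers the stated bound with no residual term.
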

\begin{proof}[Proof of Proposition \ref{prop:2}]
The desired statement follows from the exact same proof of  Theorem 2 of \citep{liu2021discrete} except that we replace the Hoeffding's inequality
on the absolute value of the difference $\left|p_k -\hat p_k\right|$ with that on the difference $(p_k -\hat p_k)$. 
\end{proof}

\subsection{Proof of Theorem \ref{thm:1}}

\thma*
\begin{proof}[Proof of Theorem \ref{thm:1}]   We can decompose the loss as\begin{align*}
\frac{1}{n}\sum_{i=1}^n\phi_{k}^S(q(h_{i})) &=\frac{1}{n}\sum_{i=1}^n\phi_{k}^S(q_{\hht(h_{i})}(h_{i}))
\\ & =\frac{1}{n} \sum_{t=1}^N \sum_{i\in I_{t}}\phi_{k}^S(q_{\hht(h_{i})}(h_{i})) 
\\ & =\frac{1}{n} \sum_{t=1}^N |I_{t}| \frac{1}{|I_{t}|}\sum_{i\in I_{t}}\phi_{k}^S(q_{t}(h_{i})) 
\end{align*}
Similarly,
\begin{align*}
\EE_{h}[\phi_{k}^S(q(h))] =\sum_{t=1}^N\Pr(\hht(h)= t)\EE_{h}[\phi_{k}^{S}(q_t (h_{}))| \hht(h)= t]. 
\end{align*}
Combining these,
\begin{align*}
&\EE_{h}[\phi_{k}^S(q(h))] - \frac{1}{n}\sum_{i=1}^n\phi_{k}^S(q(h_{i}))
\\ &=\sum_{t=1}^N\Pr(\hht(h)= t)\EE_{h}[\phi_{k}^{S}(q_t (h_{}))| \hht(h)= t]-\frac{1}{n} \sum_{t=1}^N |I_{t}| \frac{1}{|I_{t}|}\sum_{i\in I_{t}}\phi_{k}^S(q_{t}(h_{i}))
\\ & =\sum_{t=1}^N\left(\Pr(\hht(h)= t)\EE_{h}[\phi_{k}^{S}(q_t (h_{}))| \hht(h)= t]-  \frac{|I_{t}|}{n} \frac{1}{|I_{t}|}\sum_{i\in I_{t}}\phi_{k}^S(q_{t}(h_{i})) \right) 
\\ & =\sum_{t=1}^N\left(\Pr(\hht(h)= t)\EE_{h}[\phi_{k}^{S}(q_t (h_{}))| \hht(h)= t]-\frac{|I_{t}|}{n}\EE_{h}[\phi_{k}^{S}(q_t (h_{}))| \hht(h)= t]\right)
\\ &\ \qquad +\sum_{t=1}^N \left( \frac{|I_{t}|}{n}\EE_{h}[\phi_{k}^{S}(q_t (h_{}))| \hht(h)= t]-  \frac{|I_{t}|}{n} \frac{1}{|I_{t}|}\sum_{i\in I_{t}}\phi_{k}^S(q_{t}(h_{i})) \right) 
\\ & =\sum_{t=1}^N \EE_{h}[\phi_{k}^{S}(q_t (h_{}))| \hht(h)= t]\left(\Pr(\hht(h)= t)-\frac{|I_{t}|}{n}\right)
 +\sum_{t=1}^N \frac{|I_{t}|}{n} \left( \EE_{h}[\phi_{k}^{S}(q_t (h_{}))| \hht(h)= t]-   \frac{1}{|I_{t}|}\sum_{i\in I_{t}}\phi_{k}^S(q_{t}(h_{i})) \right) 
\end{align*}
Since $\phi_{k}^{S}(q_t (h_{}))=0$ when $ \hht(h)= t$ and $k >L_t^{G_t}$, by defining $\Tcal_{k}=\{t \in [N]: k \le L_t^{G_t}\}$, 
\begin{align} \label{eq:15}
&\EE_{h}[\phi_{k}^S(q(h))] - \frac{1}{n}\sum_{i=1}^n\phi_{k}^S(q(h_{i})) 
\\ \nonumber &\le\alpha\sum_{t=1}^N \left|\Pr(\hht(h)= t)-\frac{|I_{t}|}{n}\right|+\sum_{t\in \Tcal_{k}} \frac{|I_{t}|}{n} \left( \EE_{h}[\phi_{k}^{S}(q_t (h_{}))| \hht(h)= t]-   \frac{1}{|I_{t}|}\sum_{i\in I_{t}}\phi_{k}^S(q_{t}(h_{i})) \right). 
\end{align}
For the first term, the vector $(|I_{1}|,\dots,|I_{N}|)$ follows  a multinomial distribution with parameters $n$ and $p=(p_1, ..., p_{N})$, where $p_t=\Pr(\hht(h)= t) $ for $t=1,\dots,N$. Thus, by using the Bretagnolle-Huber-Carol inequality \citep[A6.6 Proposition]{van1996}, we have that with probability at least $1-\delta$,
\begin{align} \label{eq:16}
\alpha\sum_{t=1}^N \left|\Pr(\hht(h)= t)-\frac{|I_{t}|}{n}\right| \le\alpha\one\{N\ge 2\} \sqrt{\frac{2N \ln2 + 2 \ln(1/\delta)}{n}},
\end{align}
where we have the factor $\one\{N\ge 2\}$ because if $N=1$,
$$
\sum_{t=1}^N \left|\Pr(\hht(h)= t)-\frac{|I_{t}|}{n}\right| =1-1= 0.
$$
We now proceed to analyze the second term. 
Fix $t\in[N]$. Define $\Ical_{k}=\{i\in I_{t}: q_t (h_{i})= Q_{k}^{(t)}\}$ and $h_t$ to be the random variable that follow the conditional distribution: i.e., $\EE_{h_{t}}[\phi_{k}^{S}(q_t (h_{t}))]=\EE_{h}[\phi_{k}^{S}(q_t (h_{}))| \hht(h)= t]$.  
Then, we can further decompose the expectation with the conditional expectations as 
$$
\EE_{h_t}[\phi_{k}^{S}(q_t (h_t))]=\EE_{h_t}[\phi_{k}^{S}(q_t (h_t))|q_t (h_t)= Q_{k}^{(t)}]\Pr(q_t (h_t)= Q_{k}^{(t)})=\phi(Q_{k}^{(t)})\Pr(q_t (h_t)= Q_{k}^{(t)}).
$$
Using this, we can decompose the difference into two terms as
\begin{align} \label{eq:11}
&\EE_{h}[\phi_{k}^{S}(q_t (h_{}))| \hht(h)= t]-   \frac{1}{|I_{t}|}\sum_{i\in I_{t}}\phi_{k}^S(q_{t}(h_{i})) 
\\ \nonumber & =\phi(Q_{k}^{(t)})\left(\Pr(q_t (h_t)= Q_{k}^{(t)}) - \frac{|\Ical_{k}|}{|I_{t}|}\right)+\left(\phi(Q_{k}^{(t)})\frac{|\Ical_{_{k}}|}{|I_{t}|}- \frac{1}{|I_{t}|}\sum_{i\in I_{t}}\phi_{k}^S(q_t(h_{i}))\right). 
\end{align}
The second term in the right-hand side  of \eqref{eq:11} is further simplified as 
\begin{align*}
& \phi(Q_{k}^{(t)})\frac{|\Ical_{_{k}}|}{|I_{t}|}- \frac{1}{|I_{t}|}\sum_{i\in I_{t}}\phi_{k}^S(q_t(h_{i})=0,
\end{align*}
since
$
\frac{1}{|I_{t}|}\sum_{i\in I_{t}} \phi_{k}^S(q_t(h_{i}))=\frac{1}{|I_{t}|}  \sum_{i \in \Ical_{k}}\phi(q_t(h_{i})), 
$
and
$
\phi(Q_{k}^{(t)})\frac{|\Ical_{_{k}}|}{|I_{t}|}= \frac{1}{|I_{t}|}\sum_{i \in \Ical_{k}}\phi(q_t(h_{i})).
$
Substituting these into equation \eqref{eq:11} yields
\begin{align} \label{eq:14} 
 \EE_{h}[\phi_{k}^{S}(q_t (h_{}))| \hht(h)= t]-    \frac{1}{|I_{t}|}\sum_{i\in I_{t}}\phi_{k}^S(q_{t}(h_{i})) &=\phi(Q_{k}^{(t)})\left(\Pr(q_t (h_t)=Q_{k}^{(t)}) - \frac{|\Ical_{k}|}{|I_{t}|}\right)
\end{align}
Let $p_k=\Pr(q_t (h_{t})= Q_{k}^{(t)}) $ and $\hat p= \frac{|\Ical_{k}|}{|I_{t}|}$. Consider the random variable $X_i=\one\{q_t (h_{i})= Q_{k}^{(t)}\}$ with the pushforward measure of the random variable $h_i$ under the  map $h_{i}\mapsto\one\{q_t (h_{i})= Q_{k}^{(t)}\} $.
Here, we have that  $X_i \in\{0,1\} \subset [0,1]$. Since  $e$ is fixed and $h_1,\dots,h_n$ are assumed to be iid,  the Hoeffding's inequality
implies the following:  for each fixed $k \in [L^G]$,   
$$
\Pr(p_k -\hat p_k \ge t) \le \exp\left(- 2nt^2 \right).
$$ 
By solving $\delta'= \exp\left(- 2nt^2 \right)$, this implies that for each fixed $k \in [L_{t}^{G_t}]$, for any $\delta'>0$, with probability at least $1-\delta'$,
$$
p_k -\hat p_k \le \sqrt{\frac{\ln(1/\delta')}{2n}}.
$$
By taking union bounds over $k \in [L_{t}^{G_t}]$ with $\delta'=\frac{\delta}{L_t^{G_t}}$, we have that  for any $\delta>0$, with probability at least $1-\delta$,
the following holds  for all $k \in [L_{t}^{G_t}]$:
$$
p_k -\hat p_k\le \sqrt{\frac{\ln(L_{t}^{G_t}/\delta)}{2n}}.
$$
Substituting this into equation \eqref{eq:14} yields that 
for any $\delta>0$, with probability at least $1-\delta$,
the following holds  for all $k \in [L_{t}^{G_{t}}]$:
\begin{align*}
 \EE_{h}[\phi_{k}^{S}(q_t (h_{}))| \hht(h)= t] -     \frac{1}{|I_{t}|}\sum_{i\in I_{t}}\phi_{k}^S(q_{t}(h_{i})) &  \le \phi(Q_{k}^{(t)}) \sqrt{\frac{\ln(L_{t}^{G_{t}}/\delta)}{2n}}\le  \alpha \sqrt{\frac{G_{t}\ln(L_{t})+\ln(1/\delta)}{2n}}.
\end{align*}
Since $t \in \{1,\dots, N\}$ was arbitrary, this holds for an arbitrary $t\in \{1,\dots, N\}$ with probability at least $1-\delta$. By taking union bounds over all  $t\in \{1,\dots, N\}$, we have that for any $\delta>0$, with probability at least $1-\delta$,
the following holds  for all $t \in [N]$ and  $k \in [L_{t}^{G_{t}}]$:
 \begin{align} \label{eq:17}
 \EE_{h}[\phi_{k}^{S}(q_t (h_{}))| \hht(h)= t] -     \frac{1}{|I_{t}|}\sum_{i\in I_{t}}\phi_{k}^S(q_{t}(h_{i})) &  \le\alpha \sqrt{\frac{G_{t}\ln(L_{t})+\ln(N/\delta)}{2n}}.
\end{align}
Combining \eqref{eq:15}, \eqref{eq:16}, and \eqref{eq:17} with union bounds, we have that for any $\delta>0$, with probability at least $1-\delta$,
the following holds for all  $k \in [L_{t}^{G_{t}}]$:
\begin{align*}
\EE_{h}[\phi_{k}^S(q(h))] - \frac{1}{n}\sum_{i=1}^n\phi_{k}^S(q(h_{i})) 
&\le\alpha\one\{N\ge 2\} \sqrt{\frac{2N \ln2 + 2 \ln(1/\delta)}{n}}+\sum_{t\in \Tcal_{k}} \frac{|I_{t}|}{n} \left( \alpha \sqrt{\frac{G_{t}\ln(L_{t})+\ln(N/\delta)}{2n}} \right) 
\\ & =\alpha\left(\sum_{t\in \Tcal_{k}} \frac{|I_{t}|}{n}  \sqrt{\frac{G_{t}\ln(L_{t})+\ln(N/\delta)}{2n}}+\one\{N\ge 2\}\sqrt{\frac{2N \ln2 + 2 \ln(1/\delta)}{n}}\right) 
\end{align*}
Since $\Tcal_{k} \subseteq \{1,\dots,N\}$, this proves the desired statement of this theorem.

\end{proof}

\begin{figure*}
    \centering
    \includegraphics[width=0.45\linewidth]{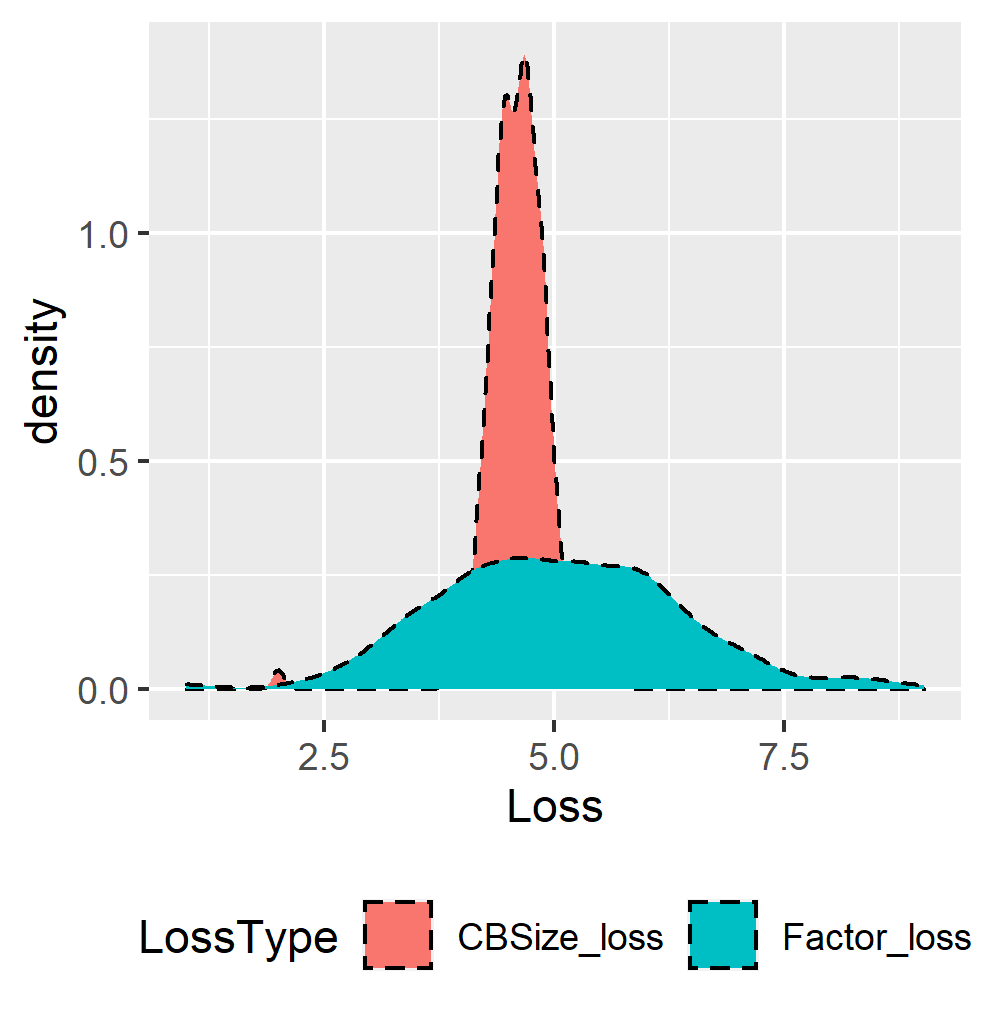}
    \includegraphics[width=0.45\linewidth]{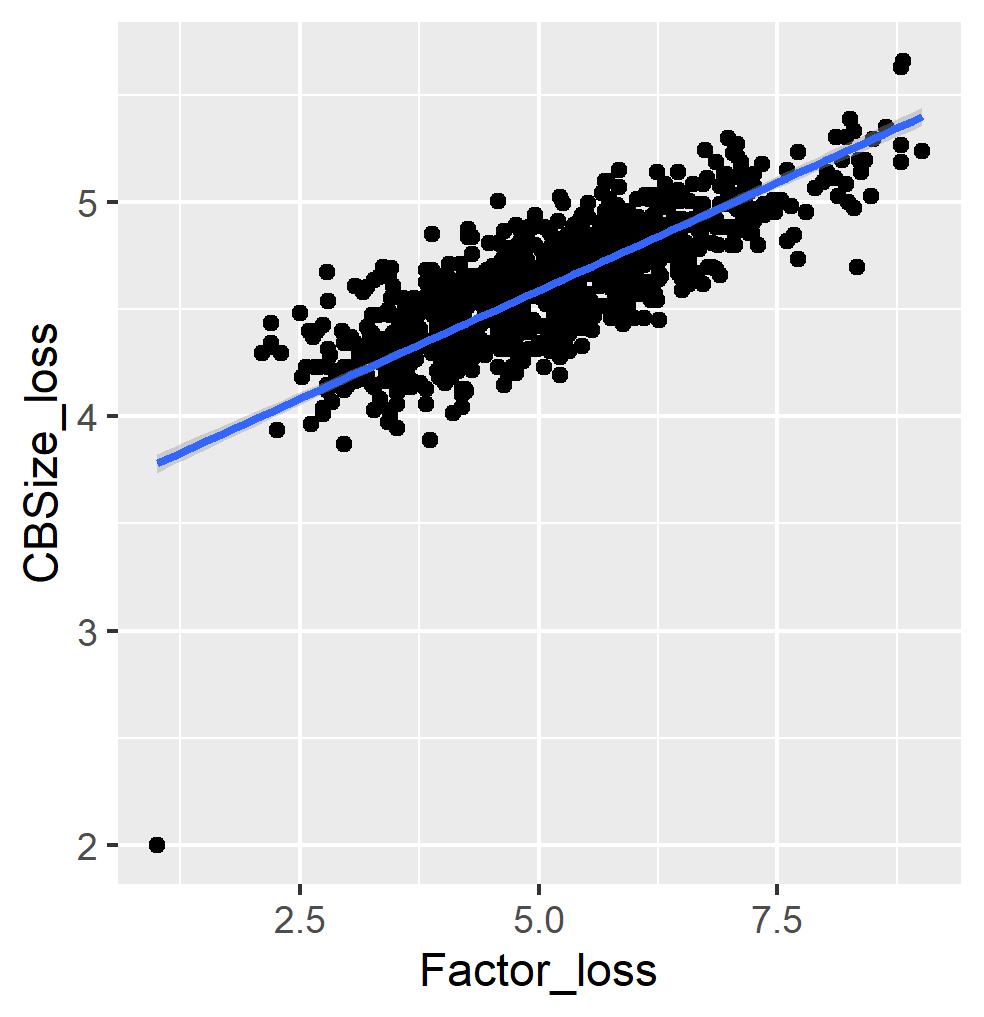}
    \caption{Upper panel: Disbution of factor loss and codebook size loss. Lower panel: Correlation between factor loss and codebook loss in DVQ}
    \label{fig:FactorCBCor}
\end{figure*}


\end{document}